%
%
%
%
%
\RequirePackage{fix-cm}
\documentclass[smallextended]{svjour3}       
\smartqed  
\usepackage{graphicx}
\usepackage{amsmath}
\usepackage{amssymb}
\usepackage{algorithm}
\usepackage{algorithmic}
\usepackage{lscape}
\usepackage[colon]{natbib}

%
%
%

\newtheorem{thm}{Theorem}
\newtheorem{lem}[thm]{Lemma}

\newcommand{\mbf}[1]{\boldsymbol{#1}}

%
%
\begin{document}

\title{Managing sparsity, time, and quality of inference in topic models
}


\author{Khoat Than \and Tu Bao Ho}


\institute{Khoat Than \and Tu Bao Ho \at
        Japan Advanced Institute of Science and Technology\\
        1-1 Asahidai, Nomi, Ishikawa 923-1292, Japan. \\
              \email{\{khoat, bao\}@jaist.ac.jp}
}


\maketitle

\begin{abstract}
Inference is an integral part of probabilistic topic models, but is often non-trivial to derive an efficient algorithm for a specific model. It is even much more challenging when we want to find a fast inference algorithm which always yields sparse latent representations of documents. In this article, we introduce a simple framework for inference in probabilistic topic models, denoted by \textsf{FW}. This framework is general and flexible enough to be easily adapted to mixture models. It has a linear convergence rate, offers an easy way to incorporate prior knowledge, and provides us an easy way to directly trade off sparsity against quality and time. We demonstrate the goodness and flexibility of \textsf{FW} over existing inference methods by a number of tasks. Finally, we show how inference in topic models with nonconjugate priors can be done efficiently.
\keywords{Topic modeling \and Fast inference \and Sparsity \and Trade-off \and Greedy sparse approximation}
\end{abstract}

\section{Introduction} \label{intro}
We are interested in the two important problems in developing probabilistic topic models: \emph{sparsity} and \emph{time}. The sparsity problem is to infer sparse latent representations of documents, while the second problem  asks for an efficient inference algorithm for a topic model. These two problems have been attracting significant interest in recent years, because of their significant impacts and non-trivial nature.

Inference is an integral part of any topic models, and is often NP-hard \citep{SontagR11}. Various methods for efficient inference have been proposed such as folding-in \citep{Hof01}, variational Bayesian  (VB) \citep{BNJ03}, collapsed variational Bayesian (CVB) \citep{TehNW2007collapsed,Asuncion+2009smoothing}, collapsed Gibbs sampling (CGS) \citep{GriffithsS2004}. Sampling-based methods are guaranteed to converge to the underlying distributions, but at a very slow rate. VB and CVB are much faster, and CVB0 \citep{Asuncion+2009smoothing} often performs the best. Although these inference methods are significant developments for topic models, they remain two common limitations that should be further studied in both theory and practice. First, there has been no theoretical upper bound on convergence rate and approximation quality  of inference. Second, the inferred latent representations of documents are extremely dense, which requires huge memory for storage.\footnote{Some attempts have been initiated to speed up inference time and to attack the sparsity problem for Gibbs sampling \citep{MimnoHB12,{YaoMM09}}. Sparsity in those methods does not lie in the latent representations of documents, but lies in sufficient statistics of Gibbs samples. Two main limitations of those methods are that we cannot directly control the sparsity level of sufficient statistics, and that there has been no theory for the goodness of inference and convergence rate. Further, those inference methods are not general and flexible enough to be easily extended to other models such as nonconjugate models.}

Previous researches that have attacked the sparsity problem can be categorized into two main directions. The first direction is probabilistic \citep{WilliamsonWHB2010} for which probability distributions or stochastic processes are employed to control sparsity. The other direction is non-probabilistic for which regularization techniques are employed to induce sparsity \citep{ZhuX2011,ShashankaRS2007,{LarssonU11}}. Although those approaches have gained important successes, they  suffer from some severe drawbacks. Indeed, the probabilistic approach often requires extension of core topic models to be more complex, thus complicating learning and inference. Meanwhile, the non-probabilistic one often changes the objective functions of inference to be non-smooth which complicates doing inference, and requires some more auxiliary parameters associated with regularization terms. Such parameters necessarily require us to do model selection to find an acceptable setting for a given dataset, which is sometimes expensive. Furthermore, a common limitation of these two approaches is that the sparsity level of the latent representations is a priori unpredictable, and cannot be directly controlled.

There is inherently a tension between sparsity and time in the previous inference approaches. Some approaches focusing on speeding up inference \citep{BNJ03,TehNW2007collapsed,Asuncion+2009smoothing} often ignore the sparsity problem. The main reason may be that a zero contribution of a topic to a document is implicitly prohibited in some models, in which Dirichlet distributions \citep{BNJ03} or logistic function \citep{BleiL07} are employed to model latent representations of documents. Meanwhile, the approaches dealing with the sparsity problem often require more time-consuming inference, e.g., \cite{WilliamsonWHB2010,LarssonU11}.\footnote{The model by Zhu and Xing \citep{ZhuX2011} is an exception, for which inference is potentially fast. Nonetheless, their inference method cannot be applied to probabilistic topic models, since unnormalization of latent representations is required.} Note that in many practical applications, e.g., information retrieval and computer vision, fast inference of sparse latent representations of documents is of substantial significance. Hence resolving this tension is necessary.

In this article, we make the contributions as follows:

\begin{itemize}
  \item First, we resolve both problems in a unified way. Particularly, we introduce a simple framework for inference in topic models, called \textsf{FW}, which is general and flexible enough to be easily employed in mixture models. Our framework enjoys the following key theoretical properties: (1) inference converges at a linear rate to the optimal solutions; (2) prior knowledge can be easily incorporated into inference; (3) the sparsity level of latent representations can be directly controlled; (4) it is easy to trade off sparsity against quality and time. We would like to remark that the last two properties are unspecified for existing inference methods.\footnote{Regularization techniques \citep{Tibshirani1996lasso} provide a way to impose sparsity on latent representations, by adding a regularization term to the objective function $f(x)$ to get $g(x) = f(x) + \lambda h(x)$, where $h(x)$ plays a role as a regularization inducing sparsity. Increasing the parameter, $\lambda$, associated with the regularization term may result in sparser solutions. However, it is not always provably true. Further, one cannot a priori decide a desired number of non-zero components of a solution. Hence regularization techniques provide only an indirect control over sparsity. The same holds for the existing probabilistic inference approaches.} 
  \item The second contribution is a theoretical proof for existence of fast inference algorithms with linear convergence rate for many models such as PLSA \citep{Hof01}, CTM \citep{BleiL07}, and mf-CTM \citep{SalomatinYL09}. Interestingly, to the best of our knowledge, this is the first proof for the tractability of inference in nonconjugate models, e.g., CTM, mf-CTM, and tr-mmLDA \citep{PutthividhyaAN10}. Before this work, inference in those nonconjugate models has been believed to be intractable \citep{BleiL07,AhmedX2007,SalomatinYL09}.
\end{itemize}

\textsc{Organization}: after discussing some notations and definitions in Section~\ref{sec: Background}, we introduce the \textsf{FW} framework for inference in Section~\ref{sec: framework}. We also discuss when inference by \textsf{FW} is equivalent to doing ML and MAP inference. Further, we briefly discuss how \textsf{FW} can be applied to PLSA and LDA. The proof of tractability of inference in nonconjugate models is presented in subsection~\ref{sec: nonconjugate}. Section~\ref{sec: experiment} describes our experiments to see practical behaviors of the \textsf{FW} framework.

\section{Notation and definition} \label{sec: Background}

Before going deeply into our framework and analysis, it is necessary to introduce some notations.

\begin{tabular}{ll}
   $\mathcal{V}$: & vocabulary of $V$ terms, often written as $\{1, 2,...,V\}$. \\
   $I_d$: & set of vocabulary indices of the terms appearing in $\boldsymbol{d}$. \\
   $\boldsymbol{d}$: & a document represented as a vector $\boldsymbol{d}= (d_j)_{j \in I_d}$, \\
   & where $d_j$ is the frequency of term $j$ in $\boldsymbol{d}$. \\
  $\mathcal{C}$: & a corpus consisting of $M$ documents, $\mathcal{C} = \{\boldsymbol{d}_1, ..., \boldsymbol{d}_M\}$. \\
  $\boldsymbol{\beta}_k$: & a topic which is a distribution over $\mathcal{V}$. \\
  & $\boldsymbol{\beta}_k = (\beta_{k1},...,\beta_{kV})^t,$ $\beta_{kj} \ge 0, \sum_{j=1}^{V} \beta_{kj} =1$. \\
  $K$: & number of topics. \\
  $\Delta$: & $K$-dimensional unit simplex, $\Delta = \{\lambda \in \mathbb{R}^K: \sum_{k=1}^K \lambda_k = 1, \lambda_k \ge 0 \}$
\end{tabular}

A topic model often assumes that a given corpus is composed from $K$ topics, $\boldsymbol{\beta} = (\boldsymbol{\beta}_1, ..., \boldsymbol{\beta}_K)$, and each document is a mixture of those topics. Example models include PLSA, LDA and many of their variants. Under those models, each document has another latent representation. 

\begin{definition}[Topic proportion]
Consider a topic model $\mathfrak{M}$ with $K$ topics. Each document $\boldsymbol{d}$ will be represented by $\boldsymbol{\theta}=(\theta_1, ..., \theta_K)^t$, where $\theta_k$ indicates the proportion that topic $k$ contributes to $\boldsymbol{d}$, and $\theta_k \ge 0, \sum_{k=1}^K \theta_k = 1$. $\boldsymbol{\theta}$ is called \emph{topic proportion} (or latent representation) of $\boldsymbol{d}$.
\end{definition}

\begin{definition}[ML Inference]
Consider a topic model $\mathfrak{M}$, and a given document $\boldsymbol{d}$. The ML inference problem is to find the  topic proportion $\boldsymbol{\theta}$ that maximizes the likelihood $P(\boldsymbol{d} | \boldsymbol{\theta})$.
\end{definition}

\begin{definition}[MAP Inference]
Consider a topic model $\mathfrak{M}$, and a given document $\boldsymbol{d}$. The MAP inference problem is to find the  topic proportion $\boldsymbol{\theta}$ that maximizes the posterior probability $P(\boldsymbol{\theta} | \boldsymbol{d})$.
\end{definition}

For some applications, it is necessary to infer which topic contributes to a specific emission of a term in a document. Nevertheless, it may be unnecessary for many other applications. Therefore we do not take this problem into account and leave it open for future work.

\section{Framework for fast and sparse inference} \label{sec: framework}

Given a document $\boldsymbol{d}$, we would like to find a desired topic proportion $\boldsymbol{\theta}$ of $\boldsymbol{d}$. The latent representation $\boldsymbol{\theta}$ depends heavily on the objective of inference. The most popular objective is the likelihood of $\boldsymbol{d}$. In many situations, our objective may differ far from the likelihood solely. One example is supervised dimension reduction for which the new representations should be discriminative, i.e, the new representation of a document should remain the most discriminative characteristics of the class to which the document belongs.

To serve various objectives of inference, we propose a novel framework, denoted by \textsf{FW}, which is presented in Algorithm \ref{framework: FW}. Loosely speaking, to do inference for a given document $\boldsymbol{d}$, one first chooses an appropriate objective function $f(\boldsymbol{\theta})$ which is continuously differentiable, concave over the unit simplex $\Delta$. Then one uses a sparse approximation algorithm such as the Frank-Wolfe algorithm \citep{Clarkson2010} to find topic proportion $\boldsymbol{\theta}$. Algorithm \ref{alg: inference} presents in details  the Frank-Wolfe algorithm for inference, where $\boldsymbol{e}_i$'s denote standard unit vectors in $\mathbb{R}^K$. This algorithm follows the greedy approach, and has been proven to converge at a linear rate to the optimal solutions. Moreover, at each iteration, the algorithm finds a provably good approximate solution lying in a face of the simplex $\Delta$.

\begin{thm}\citep{Clarkson2010} \label{thm: 3-1}
Let $f$ be a continuously differentiable, concave function over $\Delta$, and denote $C_f$ be the largest constant so that $f(\alpha\boldsymbol{x}' + (1-\alpha)\boldsymbol{x}) \ge f(\boldsymbol{x}) + \alpha(\boldsymbol{x}' - \boldsymbol{x})^t \nabla f(\boldsymbol{x}) - \alpha^2 C_f, \forall \boldsymbol{x}, \boldsymbol{x}' \in \Delta,$ $\alpha \in [0, 1]$. After $\ell$ iterations, the Frank-Wolfe algorithm finds a point $\boldsymbol{\theta}_{\ell}$ on an $(\ell+1)-$dimensional face of $\Delta$ such that
$
\max_{\boldsymbol{\theta} \in \Delta} f(\boldsymbol{\theta}) - f(\boldsymbol{\theta}_{\ell}) \le {4C_f}/{(\ell +3)}.
$
\end{thm}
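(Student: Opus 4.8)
The plan is to run the standard convergence analysis of the Frank--Wolfe (conditional gradient) method, adapted to \emph{maximizing} a concave $f$ over the simplex, by tracking the optimality gap $h_\ell := \max_{\boldsymbol{\theta}\in\Delta} f(\boldsymbol{\theta}) - f(\boldsymbol{\theta}_\ell)$. The first ingredient is the \emph{Frank--Wolfe gap} bound, obtained purely from concavity: since the graph of a differentiable concave $f$ lies below each of its tangent hyperplanes, $f(\boldsymbol{\theta}) \le f(\boldsymbol{\theta}_\ell) + (\boldsymbol{\theta}-\boldsymbol{\theta}_\ell)^t\nabla f(\boldsymbol{\theta}_\ell)$ for every $\boldsymbol{\theta}\in\Delta$. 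Maximizing both sides over $\Delta$, and using that a linear functional on the simplex is maximized at a vertex $\boldsymbol{e}_i$ --- precisely the vertex $\boldsymbol{e}_{i_\ell}$ the algorithm selects --- gives $h_\ell \le g_\ell$, where $g_\ell := (\boldsymbol{e}_{i_\ell}-\boldsymbol{\theta}_\ell)^t\nabla f(\boldsymbol{\theta}_\ell)$.

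Next I would establish a one-step improvement. Writing the update as $\boldsymbol{\theta}_{\ell+1} = (1-\alpha)\boldsymbol{\theta}_\ell + \alpha\boldsymbol{e}_{i_\ell}$ and invoking the defining inequality of $C_f$ with $\boldsymbol{x}=\boldsymbol{\theta}_\ell$ and $\boldsymbol{x}'=\boldsymbol{e}_{i_\ell}$ yields $f(\boldsymbol{\theta}_{\ell+1}) \ge f(\boldsymbol{\theta}_\ell)+\alpha g_\ell - \alpha^2 C_f$. Since the algorithm optimizes the step length, this lower bound is available for whichever $\alpha\in[0,1]$ is convenient in the analysis; subtracting from the optimum and using $g_\ell \ge h_\ell$ produces the recursion $h_{\ell+1} \le (1-\alpha)h_\ell + C_f\alpha^2$.

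With the recursion in hand, I would choose $\alpha = 2/(\ell+3)$ and prove $h_\ell \le 4C_f/(\ell+3)$ by induction: substituting the hypothesis gives $h_{\ell+1} \le 4C_f(\ell+2)/(\ell+3)^2 \le 4C_f/(\ell+4)$, the last step following from $(\ell+2)(\ell+4)\le(\ell+3)^2$. The base case $\ell=1$, namely $h_1 \le C_f$, I would verify directly from the gap recursion by splitting on whether $g_0 \ge 2C_f$ (where the full step $\alpha=1$ is optimal) or $g_0 < 2C_f$ (where $\alpha=g_0/(2C_f)$ is optimal); both cases give $h_1 \le C_f$. The sparsity/face claim is then immediate: each iteration mixes in exactly one new vertex $\boldsymbol{e}_{i_\ell}$, so $\boldsymbol{\theta}_\ell$ has at most $\ell+1$ nonzero coordinates and hence lies on an $(\ell+1)$-dimensional face of $\Delta$.

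The main obstacle I anticipate is pinning down the exact constant $4C_f/(\ell+3)$ rather than the looser $4C_f/(\ell+2)$, which hinges entirely on the careful treatment of the first iteration: there is no a priori bound on the initial gap $h_0$, so the clean base case must be extracted by the case split on the size of $g_0$ relative to $2C_f$. By contrast, the tangent-plane gap inequality, the inductive step, and the face statement are all routine once the curvature constant $C_f$ is used in the one-step bound.
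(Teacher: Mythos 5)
The paper contains no proof of this theorem: it is quoted directly from \citet{Clarkson2010}, so there is no internal argument to compare against. Your proposal is a correct, self-contained reconstruction of the standard analysis (essentially the one in Clarkson's paper). The chain of steps all checks out: concavity gives the tangent-plane bound $h_\ell \le g_\ell$ with $g_\ell = (\boldsymbol{e}_{i_\ell}-\boldsymbol{\theta}_\ell)^t\nabla f(\boldsymbol{\theta}_\ell)$, and since the algorithm's vertex choice maximizes the linear functional over $\Delta$ this is exactly the Frank--Wolfe gap; the line search makes the curvature bound $f(\boldsymbol{\theta}_{\ell+1}) \ge f(\boldsymbol{\theta}_\ell)+\alpha g_\ell - \alpha^2 C_f$ available for \emph{every} $\alpha\in[0,1]$, yielding $h_{\ell+1}\le(1-\alpha)h_\ell+\alpha^2C_f$; the case split on $g_0$ versus $2C_f$ correctly delivers the base case $h_1\le C_f$ (and note your non-strict case $g_0\ge 2C_f$ automatically absorbs the degenerate situation $C_f=0$); and the induction with $\alpha=2/(\ell+3)$ closes via $(\ell+2)(\ell+4)\le(\ell+3)^2$. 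The sparsity claim (one new vertex per iteration, hence at most $\ell+1$ nonzero coordinates) matches the face statement in the theorem, with the same vertex-counting convention for ``dimension'' that the paper uses. One cosmetic gap: your induction starts at $\ell=1$, so the literal $\ell=0$ instance of the statement (the initial best vertex satisfying $h_0\le 4C_f/3$) is not covered; this is the usual convention (Clarkson's bound is likewise meaningful from the first iteration onward) and does not affect anything the paper relies on.
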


\begin{table}
\begin{minipage}{170pt}
\centering
\begin{algorithm}[H]
   \caption{\textsf{FW} framework}
   \label{framework: FW}
\begin{algorithmic}
   \STATE {\bfseries Input:} document $\boldsymbol{d}$ and topics $\boldsymbol{\beta}_1, ..., \boldsymbol{\beta}_K$.
   \STATE {\bfseries Output:} latent representation $\boldsymbol{\theta}$.
   \STATE {\bfseries Step 1:} select an appropriate objective function $f(\boldsymbol{\theta})$ which is continuously differentiable, concave over $\Delta$.
   \STATE {\bfseries Step 2:} maximize $f(\boldsymbol{\theta})$ over $\Delta$ by the Frank-Wolfe algorithm.
\end{algorithmic}
\end{algorithm}\vspace{0pt}
\end{minipage} \hspace{10pt}
\begin{minipage}{170pt}
\centering
\begin{algorithm}[H]
   \caption{Frank-Wolfe algorithm}
   \label{alg: inference}
\begin{algorithmic}
   \STATE {\bfseries Input: } objective function $f(\boldsymbol{\theta})$.
   \STATE {\bfseries Output:} $\boldsymbol{\theta}$  that maximizes $f(\boldsymbol{\theta})$ over $\Delta$.
   \STATE Pick as $\boldsymbol{\theta}_{0}$ the vertex of $\Delta$ with largest $f$ value.
   \FOR{ $\ell = 0, ..., \infty$}
   \STATE $i' := \arg \max_i   \nabla f(\boldsymbol{\theta}_{\ell})_{i}$;
   \STATE $\alpha' := \arg \max_{\alpha \in [0, 1]} f(\alpha \boldsymbol{e}_{i'} +(1-\alpha)\boldsymbol{\theta}_{\ell})$;
   \STATE $\boldsymbol{\theta}_{\ell +1} := \alpha' \boldsymbol{e}_{i'} +(1-\alpha')\boldsymbol{\theta}_{\ell}$.
   \ENDFOR
\end{algorithmic}
\end{algorithm}
\end{minipage}
\end{table}

It is worth noting some observations about the Frank-Wolfe algorithm:
\begin{itemize}
  \item It achieves a linear rate of convergence, and has provably bounds on goodness of approximate solutions. These are crucial for practical applications;
  \item Overall running time mostly depends on how complicated $f$ and $\nabla f$ are;
  \item It provides an explicit bound on the dimensionality of the face of $\Delta$ on which an approximate solution lies. After $\ell$ iterations, $\boldsymbol{\theta}_{\ell}$ is a convex combination of at most $\ell+1$ vertices of $\Delta$. This implies that we can find an approximate solution to the inference problem which is sparse and provably good;
  \item It is easy to directly control the sparsity level of approximate solutions by trading off sparsity against quality. (Fewer iterations basically results in sparser solutions.)
\end{itemize}

We would like to remark that the \textsf{FW} framework is very general and flexible. It can be readily modified in various ways. For example, one can replace the second step by using other approximation algorithms such as sequential greedy approximation \citep{Zhang03} or forward basis selection \citep{YuanY12}. In addition, the first step offers us flexibility to customize objectives of inference.

Perhaps, the most difficult step in our framework is to choose a suitable objective function which can serve our purpose well. Various ways can be considered, however we appeal to the following principle for probabilistic topic models: choosing
\begin{equation}\label{eq:3-1}
f(\boldsymbol{\theta}) = L(\boldsymbol{d} | \boldsymbol{\theta}) + \lambda.h(\boldsymbol{\theta}),
\end{equation}
where $L(\boldsymbol{d} | \boldsymbol{\theta})$ is the log likelihood function of a given document, and $h(\boldsymbol{\theta})$ is a function of the latent representation $\boldsymbol{\theta}$. This principle in turn bears resemblance to regularization techniques \citep{Tibshirani1996lasso} which are widely used for sparse learning. In fact, this principle is implicitly employed in some existing inference methods such as folding-in \citep{Hof01} and VB \citep{BNJ03}, as shown later. We will discuss in details some applications of this principle to PLSA, LDA and other models in the next subsections. The following states some key properties of our framework for inference, which is a corollary of Theorem \ref{thm: 3-1}.

\begin{corollary} \label{corol: 3-1}
Consider a topic model with $K$ topics, and a document $\boldsymbol{d}$. Let $f(\boldsymbol{\theta})$ be continuously differentiable, concave over the simplex $\Delta$. Let $C_f$ be defined as in Theorem \ref{thm: 3-1}. Then inference by \textsf{FW} converges to the optimal solution at a linear rate. In addition, after $\ell$ iterations, the inference error is at most $4C_f/(\ell+3)$, and the topic proportion $\boldsymbol{\theta}$ has at most $\ell+1$ non-zero components.
\end{corollary}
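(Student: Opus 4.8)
The plan is to derive all three conclusions directly from Theorem~\ref{thm: 3-1}, since the corollary is essentially a translation of that theorem into the language of topic-model inference. The hypotheses match verbatim: the corollary assumes that $f(\boldsymbol{\theta})$ is continuously differentiable and concave over $\Delta$, and it defines $C_f$ exactly as in Theorem~\ref{thm: 3-1}. Hence the theorem applies with no extra work, and I would begin by recording this observation, noting that Algorithm~\ref{framework: FW} performs inference precisely by running Algorithm~\ref{alg: inference} on such an $f$.

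For the error bound, I would simply invoke Theorem~\ref{thm: 3-1}: after $\ell$ iterations the point $\boldsymbol{\theta}_{\ell}$ produced by Algorithm~\ref{alg: inference} satisfies $\max_{\boldsymbol{\theta} \in \Delta} f(\boldsymbol{\theta}) - f(\boldsymbol{\theta}_{\ell}) \le 4C_f/(\ell+3)$. Identifying the left-hand side as the inference error (the gap to the optimal objective value) yields the claimed bound immediately; letting $\ell \to \infty$ shows the error vanishes, so $\boldsymbol{\theta}_{\ell}$ converges to a maximizer, and the convergence rate is the one inherited directly from Theorem~\ref{thm: 3-1}.

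For the sparsity claim, I would argue by induction on the iteration index using the update rule of Algorithm~\ref{alg: inference}. The base case is $\boldsymbol{\theta}_0$, a single vertex $\boldsymbol{e}_i$ of $\Delta$, which has exactly one non-zero component. For the inductive step, the update $\boldsymbol{\theta}_{\ell+1} = \alpha' \boldsymbol{e}_{i'} + (1-\alpha')\boldsymbol{\theta}_{\ell}$ is a convex combination of $\boldsymbol{\theta}_{\ell}$ with a single additional vertex $\boldsymbol{e}_{i'}$, so its support grows by at most one; thus $\boldsymbol{\theta}_{\ell}$ is a convex combination of at most $\ell+1$ vertices and has at most $\ell+1$ non-zero entries. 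This is equivalent to the assertion in Theorem~\ref{thm: 3-1} that $\boldsymbol{\theta}_{\ell}$ lies on an $(\ell+1)$-dimensional face of $\Delta$, whose points are supported on at most $\ell+1$ coordinates.

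The main point requiring care, rather than a genuine obstacle, is the bookkeeping in the sparsity count: the off-by-one coming from the initial vertex $\boldsymbol{\theta}_0$ (so that $\ell$ iterations produce $\ell+1$ participating vertices, not $\ell$), together with making explicit that the vertices of $\Delta$ are the standard unit vectors $\boldsymbol{e}_i$, so that the number of participating vertices coincides with the number of non-zero components of $\boldsymbol{\theta}_{\ell}$. Beyond this, there is no hard step: all of the analytic content resides in Theorem~\ref{thm: 3-1}, and the corollary merely repackages it in terms of inference error and sparsity.
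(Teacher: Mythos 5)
Your proposal is correct and follows essentially the same route as the paper, which presents this statement as an immediate corollary of Theorem~\ref{thm: 3-1}: the error bound and convergence rate are read off directly, and the sparsity claim is exactly the paper's observation that after $\ell$ iterations the Frank-Wolfe iterate is a convex combination of at most $\ell+1$ vertices of $\Delta$. Your explicit induction on the update rule merely fills in the bookkeeping the paper leaves implicit.
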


Note that the convergence rate of inference by our framework is linear, i.e., $O(1/\ell)$. It is possible to speed up convergence rate to sub-linear if the Frank-Wolfe algorithm is replaced with forward basis selection \citep{YuanY12}. In addition, if we do not want to work with derivatives $\nabla f$, replacing the Frank-Wolfe algorithm by sequential greedy algorithm \citep{Zhang03} is appropriate. Nonetheless, such extensions are left open for future research. The computational complexity of inference by our framework is exactly that of the Frank-Wolfe algorithm. It heavily depends on how complicated $f$ and $\nabla f$ are.

\subsection{ML and MAP inference} \label{disc: ML and MAP}

Next we would like to discuss two of the most popular inference problems: ML inference where there is no explicit prior over topic proportions; and MAP inference where topic proportions are endowed with a prior distribution. Note that inference for PLSA is ML inference whereas that for LDA and CTM is MAP inference \citep{SontagR11}. We will show how our framework is naturally applicable to ML and MAP inference. Besides, a suitable choice of the objective function implies that inference by the framework is in fact MAP inference.

\begin{lem}\label{lem: 3-1}
Consider a topic model with $K$ topics $\boldsymbol{\beta}_1, ..., \boldsymbol{\beta}_K$, and a given document $\boldsymbol{d}$. The ML inference problem can be reformulated as the following concave maximization problem, over the simplex $\Delta$:
\begin{equation}\label{eq:3-2}
    \boldsymbol{\theta}^* = \arg \max_{\boldsymbol{\theta} \in \Delta} \sum_{j \in I_d} d_j \log \sum_{k=1}^K \theta_k \beta_{kj}.
\end{equation}
\end{lem}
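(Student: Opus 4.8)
The plan is to prove the statement in two stages: first reformulate the ML objective as the claimed log-sum expression, and then verify concavity of that expression over $\Delta$. First I would write out the likelihood explicitly under the standard bag-of-words generative assumption of the model. Given a topic proportion $\boldsymbol{\theta}$, a single term $j$ is emitted by first drawing a topic $k$ with probability $\theta_k$ and then drawing $j$ from $\boldsymbol{\beta}_k$, so the probability of emitting term $j$ is the mixture $\sum_{k=1}^K \theta_k \beta_{kj}$. Treating the $d_j$ occurrences of each term as independent draws, the likelihood factorizes as
\[
P(\boldsymbol{d} \mid \boldsymbol{\theta}) = \kappa \prod_{j \in I_d} \Bigl( \sum_{k=1}^K \theta_k \beta_{kj} \Bigr)^{d_j},
\]
where $\kappa$ is the multinomial coefficient, which does not depend on $\boldsymbol{\theta}$. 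Taking logarithms — a strictly increasing transformation that preserves the maximizer — and discarding the additive constant $\log \kappa$ yields exactly the objective in \eqref{eq:3-2}. This shows that the ML maximizer coincides with the $\boldsymbol{\theta}^*$ defined in the lemma.

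Second I would establish concavity. For each fixed $j \in I_d$, the inner quantity $g_j(\boldsymbol{\theta}) = \sum_{k=1}^K \theta_k \beta_{kj}$ is an affine (indeed linear) function of $\boldsymbol{\theta}$. Since $\log$ is concave and the composition of a concave function with an affine map is again concave, each term $\log g_j(\boldsymbol{\theta})$ is concave wherever $g_j(\boldsymbol{\theta}) > 0$. Because $d_j \ge 0$, scaling by $d_j$ preserves concavity, and a nonnegative sum of concave functions is concave; hence the full objective $\sum_{j \in I_d} d_j \log g_j(\boldsymbol{\theta})$ is concave over $\Delta$. Continuous differentiability on the positivity region is immediate, so the objective meets the hypotheses required by Theorem~\ref{thm: 3-1}.

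The reformulation is essentially a matter of unrolling the definition of the likelihood, so the only genuine subtlety — and the step I expect to need the most care — is ensuring the log argument stays positive on $\Delta$ so that concavity and differentiability hold on the feasible set. For $j \in I_d$ there is at least one topic with $\beta_{kj} > 0$ in a well-specified model, but a vertex $\boldsymbol{\theta} = \boldsymbol{e}_i$ with $\beta_{ij} = 0$ can drive $g_j$ to zero and push the objective to $-\infty$. I would handle this either by restricting attention to the relative interior of $\Delta$, or by adopting the extended-value convention $d_j \log 0 = -\infty$, under which the objective remains a concave, upper semicontinuous function on all of $\Delta$; in either case the maximizer is unaffected and the Frank-Wolfe guarantees still apply.
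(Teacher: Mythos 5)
Your proposal is correct and follows essentially the same route as the paper: express the per-term emission probability as the mixture $\sum_{k=1}^K \theta_k \beta_{kj}$, factorize the likelihood over term occurrences, and take logarithms to obtain the objective in~(\ref{eq:3-2}). You additionally verify concavity (log of an affine map, nonnegative sums) and handle the boundary case where the log argument vanishes — points the paper's proof leaves implicit, only noting later in the text that concavity holds when $\boldsymbol{\beta} > 0$ — so your write-up is, if anything, more complete.
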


\begin{proof}
Denote by $P(w_j | z_k) = \beta_{kj}$  the probability that the term $w_j$ appears in topic $k$, and by $P(z_k | \boldsymbol{d}) = \theta_k$ the probability that topic $k$ contributes to document $\boldsymbol{d}$. For a given document $\boldsymbol{d}$, the probability that a term $w_j$ appears in $\boldsymbol{d}$ can be expressed as $P(w_j | \boldsymbol{d}) = \sum_{k=1}^K P(w_j | z_k) P(z_k | \boldsymbol{d}) = \sum_{k=1}^K \theta_k \beta_{kj}$. Hence the log likelihood of document $\boldsymbol{d}$ is $\log P(\boldsymbol{d| \boldsymbol{\theta}}) = \log \prod_{j \in I_d} P(w_j | \boldsymbol{d}, \boldsymbol{\theta})^{d_j} = \sum_{j \in I_d} d_j \log P(w_j | \boldsymbol{d}, \boldsymbol{\theta}) = \sum_{j \in I_d} d_j \log \sum_{k=1}^K \theta_k \beta_{kj}$. Note that $\boldsymbol{\theta} \in \Delta$, since  $\sum_{k} \theta_k =1,$ $\theta_k \ge 0, \forall k$. As a result, the inference task is in turn the problem of finding $\boldsymbol{\theta} \in \Delta$ that maximizes the objective function $\sum_{j \in I_d} d_j \log \sum_{k=1}^K \theta_k \beta_{kj}$. \qed
\end{proof}

This lemma tells us that $f(\boldsymbol{\theta}) = \sum_{j \in I_d} d_j \log \sum_{k=1}^K \theta_k \beta_{kj}$ is the objective of ML inference,  which is concave w.r.t $\boldsymbol{\theta}$. So this objective follows the principle (\ref{eq:3-1}). For MAP inference we need an employment of Bayes' rule to see clearly the objective function.

\begin{lem}\label{lem: 3-2}
Consider a topic model with $K$ topics $\boldsymbol{\beta}_1, ..., \boldsymbol{\beta}_K$, in which topic proportions are assumed to be samples of a prior distribution. Assume further that the prior distribution belongs to an exponential family, parameterized by $\alpha$, whose density function can be expressed as $p(\boldsymbol{\theta}| \alpha) \propto \exp(\alpha.t(\boldsymbol{\theta}) - G(\alpha))$. Then the MAP inference problem of a given document $\boldsymbol{d}$ can be reformulated as the problem
\begin{equation}\label{eq:3-3}
    \boldsymbol{\theta}^* = \arg \max_{\boldsymbol{\theta} \in \Delta} \sum_{j \in I_d} d_j \log \sum_{k=1}^K \theta_k \beta_{kj} + \alpha.t(\boldsymbol{\theta}).
\end{equation}
\end{lem}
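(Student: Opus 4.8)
The plan is to reduce the MAP objective to the claimed form by a direct application of Bayes' rule followed by a logarithmic transformation, reusing Lemma~\ref{lem: 3-1} for the likelihood term. First I would write the posterior as
\[
P(\boldsymbol{\theta} \mid \boldsymbol{d}) = \frac{P(\boldsymbol{d} \mid \boldsymbol{\theta})\, p(\boldsymbol{\theta} \mid \alpha)}{P(\boldsymbol{d})},
\]
and observe that the evidence $P(\boldsymbol{d})$ does not depend on $\boldsymbol{\theta}$. Hence the maximizer of the posterior over $\Delta$ coincides with the maximizer of the numerator, so the MAP problem becomes $\arg\max_{\boldsymbol{\theta} \in \Delta} P(\boldsymbol{d} \mid \boldsymbol{\theta})\, p(\boldsymbol{\theta} \mid \alpha)$.

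Next, since $\log$ is strictly increasing, the $\arg\max$ is unchanged upon passing to logarithms, giving $\arg\max_{\boldsymbol{\theta} \in \Delta} \left[ \log P(\boldsymbol{d} \mid \boldsymbol{\theta}) + \log p(\boldsymbol{\theta} \mid \alpha) \right]$. For the first term I would invoke Lemma~\ref{lem: 3-1}, which already expresses the log-likelihood as $\sum_{j \in I_d} d_j \log \sum_{k=1}^K \theta_k \beta_{kj}$. For the second term I would substitute the assumed exponential-family form, so that $\log p(\boldsymbol{\theta} \mid \alpha) = \alpha \cdot t(\boldsymbol{\theta}) - G(\alpha) + c$, where $c$ is the log of the normalizing constant concealed by the proportionality sign. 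Both $G(\alpha)$ and $c$ are independent of $\boldsymbol{\theta}$ and therefore drop out of the $\arg\max$, leaving exactly the objective in (\ref{eq:3-3}).

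The only point requiring care --- and the nearest thing to an obstacle --- is the treatment of the proportionality constant in the prior: I must argue that the $\propto$ hides only a $\boldsymbol{\theta}$-independent factor (equivalently, that the base measure of the exponential family is constant over $\Delta$), so that the entire $\boldsymbol{\theta}$-dependence of the log-prior is carried by the term $\alpha \cdot t(\boldsymbol{\theta})$. One should also note that, since the prior is supported on the simplex, the constraint $\boldsymbol{\theta} \in \Delta$ is inherited from the prior and is consistent with the domain used in Lemma~\ref{lem: 3-1}, so no separate feasibility argument is needed. Beyond these bookkeeping remarks the derivation is simply a chain of $\arg\max$-preserving transformations, and I do not anticipate any genuinely difficult step.
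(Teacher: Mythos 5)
Your proposal is correct and follows essentially the same route as the paper's own proof: Bayes' rule, passing to logarithms, dropping the $\boldsymbol{\theta}$-independent terms $P(\boldsymbol{d})$ and $G(\alpha)$, and rewriting the likelihood via Lemma~\ref{lem: 3-1}. Your explicit remark that the proportionality sign must hide only a $\boldsymbol{\theta}$-independent base measure is a point the paper's proof silently assumes, so if anything your version is slightly more careful.
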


\begin{proof}
MAP inference is to maximize the posterior probability $P(\boldsymbol{\theta} | \boldsymbol{d})$ given a document $\boldsymbol{d}$. Bayes' rule says that $P(\boldsymbol{\theta} | \boldsymbol{d}) = P(\boldsymbol{d} | \boldsymbol{\theta})P(\boldsymbol{\theta})/P(\boldsymbol{d})$. Hence $\boldsymbol{\theta}^* = \arg \max_{\boldsymbol{\theta} \in \Delta} P(\boldsymbol{\theta} | \boldsymbol{d}) = \arg \max_{\boldsymbol{\theta} \in \Delta} \log P(\boldsymbol{\theta} | \boldsymbol{d}) = \arg \max_{\boldsymbol{\theta} \in \Delta} \log P(\boldsymbol{d} | \boldsymbol{\theta}) + \log P(\boldsymbol{\theta}) = \arg \max_{\boldsymbol{\theta} \in \Delta} \log P(\boldsymbol{d} | \boldsymbol{\theta}) + \alpha.t(\boldsymbol{\theta}) - G(\alpha)$. Ignoring constants and rewriting the likelihood would complete the proof. \qed
\end{proof}

Essentially, this lemma reveals that $f(\boldsymbol{\theta}) = \sum_{j \in I_d} d_j \log \sum_{k=1}^K \theta_k \beta_{kj} + \alpha.t(\boldsymbol{\theta})$ is the objective function of MAP inference, which is exactly of the form (\ref{eq:3-1}), where $t(\mbf{\theta})$ is the sufficient statistics of the prior over $\mbf{\theta}$. However such a function is not always concave. An example is LDA in which $\alpha.t(\boldsymbol{\theta}) = \sum_{k=1}^K (\alpha_k -1) \log \theta_k$ is not concave if $\alpha < 1$, as noted before by \cite{SontagR11}. We next show that with an appropriate choice of the objective function in the form (\ref{eq:3-1}), inference by \textsf{FW} is in fact MAP inference.

\begin{thm}\label{thm: 3-2}
Consider a topic model with $K$ topics, and a document $\boldsymbol{d}$. Let $f(\boldsymbol{\theta}) = L(\boldsymbol{d} | \boldsymbol{\theta}) + \lambda.h(\boldsymbol{\theta})$, where $L(\boldsymbol{d} | \boldsymbol{\theta})$ is the log likelihood of the document, $h(\boldsymbol{\theta})$ is a continuously differentiable, concave function over $\Delta$, $\lambda >0$. Then maximizing $f(\boldsymbol{\theta})$ over $\Delta$ is a MAP inference problem.
\end{thm}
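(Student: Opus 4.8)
The plan is to exhibit an explicit prior distribution $P(\boldsymbol{\theta})$ over the simplex $\Delta$ for which maximizing the given $f(\boldsymbol{\theta})$ coincides, up to additive constants that do not affect the $\arg\max$, with maximizing the posterior $P(\boldsymbol{\theta}\mid\boldsymbol{d})$. The guiding observation is the MAP decomposition already used in Lemma~\ref{lem: 3-2}: by Bayes' rule, $\arg\max_{\boldsymbol{\theta}\in\Delta} P(\boldsymbol{\theta}\mid\boldsymbol{d}) = \arg\max_{\boldsymbol{\theta}\in\Delta}\bigl[\log P(\boldsymbol{d}\mid\boldsymbol{\theta}) + \log P(\boldsymbol{\theta})\bigr]$, since $P(\boldsymbol{d})$ is constant in $\boldsymbol{\theta}$. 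Comparing with $f(\boldsymbol{\theta}) = L(\boldsymbol{d}\mid\boldsymbol{\theta}) + \lambda h(\boldsymbol{\theta})$ and recalling $L(\boldsymbol{d}\mid\boldsymbol{\theta}) = \log P(\boldsymbol{d}\mid\boldsymbol{\theta})$, it suffices to produce a prior whose log density equals $\lambda h(\boldsymbol{\theta})$ modulo a constant.

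First I would define the candidate prior by $P(\boldsymbol{\theta}) := \frac{1}{Z}\exp\bigl(\lambda h(\boldsymbol{\theta})\bigr)$ for $\boldsymbol{\theta}\in\Delta$, where $Z := \int_{\Delta}\exp\bigl(\lambda h(\boldsymbol{\theta})\bigr)\,d\boldsymbol{\theta}$ is the normalizing constant. The crucial step is to verify that this is a well-defined probability density, and this is exactly where the hypotheses on $h$ enter. Because $h$ is continuous on $\Delta$ and $\Delta$ is compact, $h$ is bounded on $\Delta$; hence $\exp\bigl(\lambda h(\boldsymbol{\theta})\bigr)$ is continuous, strictly positive, and bounded on $\Delta$, so $0 < Z < \infty$. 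Thus $P(\boldsymbol{\theta})$ is nonnegative and integrates to one, making it a legitimate prior over topic proportions. (Note that concavity of $h$ is not needed for this construction; it is continuity together with compactness of $\Delta$ that guarantees integrability. Concavity is what later makes $f$ amenable to the Frank-Wolfe algorithm via Theorem~\ref{thm: 3-1}.)

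With the prior in hand, the remainder is routine algebra. Taking logs gives $\log P(\boldsymbol{\theta}) = \lambda h(\boldsymbol{\theta}) - \log Z$, so $f(\boldsymbol{\theta}) = L(\boldsymbol{d}\mid\boldsymbol{\theta}) + \log P(\boldsymbol{\theta}) + \log Z = \log P(\boldsymbol{\theta}\mid\boldsymbol{d}) + \log P(\boldsymbol{d}) + \log Z$. Since $\log P(\boldsymbol{d}) + \log Z$ does not depend on $\boldsymbol{\theta}$, I conclude $\arg\max_{\boldsymbol{\theta}\in\Delta} f(\boldsymbol{\theta}) = \arg\max_{\boldsymbol{\theta}\in\Delta}\log P(\boldsymbol{\theta}\mid\boldsymbol{d}) = \arg\max_{\boldsymbol{\theta}\in\Delta} P(\boldsymbol{\theta}\mid\boldsymbol{d})$, which is precisely MAP inference under the prior $P(\boldsymbol{\theta})$.

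The main obstacle is conceptual rather than computational: one must recognize that the regularizer $\lambda h(\boldsymbol{\theta})$ should be interpreted as an unnormalized log-prior, and then confirm that the induced Gibbs-type density $\exp\bigl(\lambda h(\boldsymbol{\theta})\bigr)/Z$ is genuinely normalizable over $\Delta$. Once the finiteness of $Z$ is secured from continuity and compactness, everything else reduces to the Bayes'-rule manipulation of Lemma~\ref{lem: 3-2}.
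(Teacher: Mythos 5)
Your proof is correct and follows essentially the same route as the paper: both interpret $\lambda h(\boldsymbol{\theta})$ as an unnormalized log-prior, posit the Gibbs-type density $p(\boldsymbol{\theta}\mid\lambda)\propto\exp\bigl(\lambda h(\boldsymbol{\theta})\bigr)$, and reduce the claim to the Bayes'-rule decomposition of the log-posterior. Your explicit verification that the normalizing constant $Z$ is finite (via continuity of $h$ and compactness of $\Delta$) is a welcome refinement that the paper leaves implicit behind the $\propto$ sign, but it does not change the argument.
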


\begin{proof}
Consider the marginal distribution of the random variable $\boldsymbol{\theta}$ whose density function is of the form $p(\boldsymbol{\theta} | \lambda) \propto \exp(\lambda.h(\boldsymbol{\theta}))$. Then  $\boldsymbol{\theta}^* = \arg \max_{\boldsymbol{\theta} \in \Delta} P(\boldsymbol{\theta} | \boldsymbol{d}) = \arg \max_{\boldsymbol{\theta} \in \Delta} \log P(\boldsymbol{\theta} | \boldsymbol{d}) = \arg \max_{\boldsymbol{\theta} \in \Delta} \log P(\boldsymbol{d} | \boldsymbol{\theta}) + \log P(\boldsymbol{\theta}| \lambda) = \arg \max_{\boldsymbol{\theta} \in \Delta} \log P(\boldsymbol{d} | \boldsymbol{\theta}) + \lambda.h(\boldsymbol{\theta})$. The objective of this optimization problem is exactly the function $f(\boldsymbol{\theta})$, completing the proof. \qed
\end{proof}

\subsection{Application to PLSA and LDA} \label{sec: application}

We now discussed how \textsf{FW} can be adapted to the two of the most influential topic models, PLSA \citep{Hof01} and LDA \citep{BNJ03}. Lemma \ref{lem: 3-1} provides us a connection between ML inference and concave optimization. As a consequence, inference in PLSA can be reformulated as an \emph{easy} optimization problem, and can be seamlessly resolved by \textsf{FW}. Combining this with Corollary~\ref{corol: 3-1}, we obtain the following.

\begin{corollary} \label{corol: 3-2}
Consider PLSA with $K$ topics, and a document $\boldsymbol{d}$. Then there exists an algorithm for inference that converges to the optimal solution at a linear rate, and that allows us to efficiently find a sparse topic proportion $\boldsymbol{\theta}$ with a guaranteed bound on inference error.
\end{corollary}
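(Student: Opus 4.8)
The plan is to obtain the corollary as a direct specialization of Corollary~\ref{corol: 3-1} to the particular objective that Lemma~\ref{lem: 3-1} attaches to PLSA. Since inference in PLSA is ML inference, Lemma~\ref{lem: 3-1} tells us that it amounts to maximizing
\[
f(\boldsymbol{\theta}) = \sum_{j \in I_d} d_j \log \sum_{k=1}^K \theta_k \beta_{kj}
\]
over the simplex $\Delta$, and that same lemma already establishes that this $f$ is concave. Thus the only work remaining is to check that $f$ meets the two standing hypotheses of Theorem~\ref{thm: 3-1}, namely continuous differentiability and a finite curvature constant $C_f$, after which the conclusions of Corollary~\ref{corol: 3-1} transfer verbatim.

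First I would verify continuous differentiability. Each summand is the logarithm of the linear form $g_j(\boldsymbol{\theta}) = \sum_{k} \theta_k \beta_{kj} = (\boldsymbol{\beta}^{(j)})^t \boldsymbol{\theta}$, where $\boldsymbol{\beta}^{(j)} = (\beta_{1j}, \dots, \beta_{Kj})^t$ collects the weights that the $K$ topics place on term $j$. Wherever $g_j > 0$ this summand is smooth, with gradient $d_j \boldsymbol{\beta}^{(j)} / g_j(\boldsymbol{\theta})$ and negative-semidefinite Hessian $-d_j \boldsymbol{\beta}^{(j)} (\boldsymbol{\beta}^{(j)})^t / g_j(\boldsymbol{\theta})^2$ (which, incidentally, reconfirms concavity). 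The whole difficulty therefore reduces to keeping each $g_j$ bounded away from zero on $\Delta$. Under the standard assumption that every topic assigns strictly positive probability to each term present in $\boldsymbol{d}$ (which holds for the usual smoothed PLSA estimates and can always be enforced by an arbitrarily small smoothing), we have $g_j(\boldsymbol{\theta}) \ge \min_k \beta_{kj} =: \epsilon_j > 0$ for every $\boldsymbol{\theta} \in \Delta$ and every $j \in I_d$. Consequently $f$ is continuously differentiable on all of $\Delta$, its gradient is bounded, and its Hessian is bounded in norm (each term being controlled by $d_j \|\boldsymbol{\beta}^{(j)}\|^2 / \epsilon_j^2$). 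A bounded Hessian over the compact set $\Delta$ then yields a finite $C_f$ in the sense of Theorem~\ref{thm: 3-1}.

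With both hypotheses in hand, I would invoke Corollary~\ref{corol: 3-1} applied to this $f$: running the Frank-Wolfe algorithm on it gives an inference procedure that converges to the optimum at the linear rate $O(1/\ell)$, returns after $\ell$ iterations a point whose inference error is at most $4C_f/(\ell+3)$, and produces a topic proportion with at most $\ell+1$ nonzero components. The procedure is efficient because both $\nabla f$ and the one-dimensional line search in Algorithm~\ref{alg: inference} are cheap to evaluate for this $f$. This is precisely the sparse, provably good, linearly convergent algorithm claimed.

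The step I expect to be delicate is the uniform positivity of the $g_j$. At a vertex $\boldsymbol{e}_k$ one has $f(\boldsymbol{e}_k) = \sum_{j} d_j \log \beta_{kj}$, which diverges to $-\infty$ whenever some topic gives zero mass to a term appearing in $\boldsymbol{d}$; so without a positivity (smoothing) assumption, $f$ is neither finite nor differentiable on the boundary and $C_f$ need not be finite. I would therefore make the positivity condition explicit, and note as a fallback that the Frank-Wolfe initialization picks the vertex of largest $f$ value and the subsequent iterates remain in the region where $f$ stays finite, so the algorithm is well defined even in borderline cases.
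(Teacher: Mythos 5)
Your proposal follows essentially the same route as the paper: Lemma~\ref{lem: 3-1} recasts PLSA inference as concave maximization of $f(\boldsymbol{\theta}) = \sum_{j \in I_d} d_j \log \sum_{k=1}^K \theta_k \beta_{kj}$ over $\Delta$, and Corollary~\ref{corol: 3-1} then supplies the linear convergence, error bound, and sparsity guarantee. Your explicit attention to the positivity condition on the topics is a sound elaboration of the paper's own (briefer) caveat that the objective is continuously differentiable and concave over $\Delta$ only if $\boldsymbol{\beta} > 0$.
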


Note that according to Lemma \ref{lem: 3-1}, the objective function of inference in PLSA is $f(\boldsymbol{\theta}) = \sum_{j \in I_d} d_j \log \sum_{k=1}^K \theta_k \beta_{kj}$. This objective turns out to be of the form (\ref{eq:3-1}) where $h(\boldsymbol{\theta}) \equiv 0$. It is easy to check that this function is continuously differentiable, concave over the simplex $\Delta$ if $\boldsymbol{\beta} > 0$. Hence, the Frank-Wolfe algorithm can be exploited for inference. One can handily do MAP inference for PLSA by  modifying the objective function to be of the form (\ref{eq:3-1}). While MAP inference for PLSA has been studied by \cite{ShashankaRS2007} and \cite{LarssonU11}, their methods result in concave-convex objective functions and thus have no guaranteed bound for convergence.

We next turn our consideration to LDA \citep{BNJ03}. It is known \citep{SontagR11} that finding a topic proportion for a given document in LDA is an MAP inference problem, where the objective function is $f(\boldsymbol{x}) = \sum_{j \in I_d} d_j \log \sum_{k=1}^K \theta_k \beta_{kj} + \sum_{k=1}^K (\alpha_k -1) \log \theta_k$. This objective is of the same form with (\ref{eq:3-1}), where $h(\boldsymbol{\theta}) = (\log \theta_1, ..., \log \theta_K)^t$ and $\lambda = (\alpha_1 -1, ..., \alpha_K -1)$. $h(\boldsymbol{\theta})$ and $\lambda$ originally come from the Dirichlet prior over topic proportions. One can interpret $\lambda.h(\boldsymbol{\theta})$ to be a regularization term which induces \emph{sparse} solutions for $\lambda < 1$. However, such a regularization does not always result in a concave objective function, and hence causes the inference in LDA to be NP-hard  \citep{SontagR11}. Furthermore, such a regularization requires all topics to have non-zero contributions to a specific document, since the function $\log \theta_k$ requires $\theta_k > 0$ to be well-defined. Hence, LDA cannot infer latent representations which are sparse in common sense.

To find sparse latent representations in LDA, some modifications are necessary. One can readily apply the \textsf{FW} framework to LDA where the objective is the log likelihood function. Other employments of the \textsf{FW} framework can yield MAP inference for LDA as suggested by Theorem \ref{thm: 3-2}. In those cases, it amounts to endowing new priors other than Dirichlet over topic proportions.

\subsection{Topic models with nonconjugate priors}\label{sec: nonconjugate}

Many practical tasks naturally require that topic proportions should follow some other priors than Dirichlet. Those tasks lead to the use of nonconjugate priors over $\boldsymbol{\theta}$. A typical example is the use of logistic normal distributions to model correlations between topics \citep{BleiL07,SalomatinYL09,{PutthividhyaAN10}}. As noted by various researchers, non-conjugacy of priors causes significant difficulties for deriving good inference/learning algorithms. As a consequence, existing inference methods \citep{BleiL07,SalomatinYL09,PutthividhyaAN10,AhmedX2007} are often slow, and do not have any guarantee on neither convergence rate nor inference quality. On the contrary, we will show that inference in many nonconjugate models can be done efficiently. To substantiate this claim, we study \emph{correlated topic models} (CTM) by \cite{BleiL07}.

The main objective of CTM is to uncover relationships between hidden topics. \cite{BleiL07} employ the normal distribution $\mathcal{N}(\mbf{x}; \mbf{\mu, \Sigma})$ with mean $\mbf{\mu}$ and covariance matrix $\mbf{\Sigma}$ to model those relationships. Topic proportions are computed by the logistic transformation as $\theta_k = e^{x_k}/\sum_{j=1}^K e^{x_j}$. Since such a transformation maps a $K$ dimensional vector to a $(K-1)$ dimensional vector, various $\mbf{x}$'s can correspond to a single vector $\mbf{\theta}$. Therefore, for identifiability, we can use transformation $x_k = \log \theta_k$ to recover $\mbf{x}$ from $\mbf{\theta}$ without loss of generality.

A key to our arguments is the observation that $\mathcal{N}(\mbf{x}; \mbf{0, \Sigma})$ is sufficient to model correlations between topics. The reasons come from noticing that we are mostly interested in the covariance matrix $\mbf{\Sigma}$, and that the covariance is invariant w.r.t change in $\mbf{\mu}$ because of $\mbf{\Sigma} = cov(\mbf{x}) = cov(\mbf{x+a})$ for any $\mbf{a}$. Note that using $\mathcal{N}(\mbf{x}; \mbf{0, \Sigma})$ should be much less complicated than using $\mathcal{N}(\mbf{x}; \mbf{\mu, \Sigma})$ to model correlations. More importantly, inference in this case would be easy as shown below.

\begin{thm} \label{thm: 3.3.1}
Consider CTM with $K$ topics for which $\mathcal{N}(\mbf{x}; \mbf{0, \Sigma})$ models correlations between hidden topics, and a document $\boldsymbol{d}$. Assume further that the transformation $x_k = \log \theta_k$ is used to recover $\mbf{x}$ from topic proportion $\mbf{\theta}$ of $\mbf{d}$. Then there exists an algorithm for MAP inference of $\mbf{\theta}$ that converges to the optimal solution at a linear rate.
\end{thm}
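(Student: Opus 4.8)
The plan is to reduce the statement to machinery already in place: I would exhibit the MAP objective explicitly, verify that it satisfies the hypotheses of Corollary~\ref{corol: 3-1} (continuous differentiability and concavity over the simplex), and then read off the stated convergence as an instance of Theorem~\ref{thm: 3-1}.

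First I would write down the objective. Following the Bayes'-rule computation of Lemma~\ref{lem: 3-2}, MAP inference maximizes $\log P(\boldsymbol{d}\mid\boldsymbol{\theta}) + \log P(\boldsymbol{\theta})$. The likelihood term is $\sum_{j\in I_d} d_j \log \sum_{k=1}^K \theta_k \beta_{kj}$ by Lemma~\ref{lem: 3-1}. For the prior, substituting $x_k=\log\theta_k$ into the density of $\mathcal{N}(\mbf{x};\mbf{0},\mbf{\Sigma})$ and discarding the additive constant gives $\log P(\boldsymbol{\theta}) = -\tfrac{1}{2}(\log\boldsymbol{\theta})^t \mbf{\Sigma}^{-1}(\log\boldsymbol{\theta})$, where $\log\boldsymbol{\theta}=(\log\theta_1,\dots,\log\theta_K)^t$. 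Hence the objective is
\begin{equation*}
f(\boldsymbol{\theta}) = \sum_{j\in I_d} d_j \log \sum_{k=1}^K \theta_k \beta_{kj} - \tfrac{1}{2}(\log\boldsymbol{\theta})^t \mbf{\Sigma}^{-1}(\log\boldsymbol{\theta}),
\end{equation*}
which is exactly of the form (\ref{eq:3-1}) with $h(\boldsymbol{\theta}) = -\tfrac{1}{2}(\log\boldsymbol{\theta})^t \mbf{\Sigma}^{-1}(\log\boldsymbol{\theta})$; by Theorem~\ref{thm: 3-2}, maximizing it is genuinely MAP inference under the zero-mean logistic-normal prior.

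The crux is to show $f$ is continuously differentiable and concave over $\Delta$, and this is where I expect the main obstacle. Smoothness is clear on the interior $\{\boldsymbol{\theta}:\theta_k>0\}$, and the likelihood term is concave by (the proof of) Lemma~\ref{lem: 3-1} provided $\mbf{\beta}>0$, so everything reduces to the Gaussian-induced term $h$. Writing $D=\mathrm{diag}(1/\theta_1,\dots,1/\theta_K)$, a direct computation gives
\begin{equation*}
\nabla^2 h = -D\mbf{\Sigma}^{-1}D + \mathrm{diag}\!\left(\theta_k^{-2}(\mbf{\Sigma}^{-1}\log\boldsymbol{\theta})_k\right).
\end{equation*}
The first summand is negative semidefinite because $\mbf{\Sigma}^{-1}\succ0$, so the whole argument hinges on controlling the diagonal correction; here I would exploit that on $\Delta$ every $\log\theta_k\le 0$, which forces the correction to have the right sign in the clean cases (diagonal, and more generally diagonally dominant, $\mbf{\Sigma}$). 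I would also have to address the boundary, since $h\to-\infty$ as any $\theta_k\to0^+$ — exactly the phenomenon flagged for LDA in subsection~\ref{sec: application} — so strictly $f$ is finite only on the open simplex. I would handle this by noting that the maximizer is interior (as $f\to-\infty$ on the boundary) and running the Frank--Wolfe algorithm on a compact interior-approximating subsimplex, or by checking that the iterates stay feasible.

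Once concavity and differentiability are established, the conclusion is immediate: $f$ meets the hypotheses of Theorem~\ref{thm: 3-1}, so its constant $C_f$ is finite, and Corollary~\ref{corol: 3-1} yields a Frank--Wolfe-based algorithm whose inference error decays as $4C_f/(\ell+3)$, i.e., converges to the optimum at the claimed linear rate, proving the theorem.
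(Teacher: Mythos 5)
Your route is the same as the paper's: derive the objective (\ref{eq:2}) via Lemma~\ref{lem: 3-2}, compute the Hessian of the prior term $y(\mbf{\theta}) = -\tfrac{1}{2}(\log\mbf{\theta})^t\mbf{\Sigma}^{-1}\log\mbf{\theta}$ (your expression agrees with the paper's $y''$), and argue concavity plus a convergence guarantee. But there are two genuine gaps. First, your concavity argument only covers ``clean cases'' (diagonal or diagonally dominant $\mbf{\Sigma}$), whereas the theorem is stated for arbitrary $\mbf{\Sigma}$. The paper closes this step by asserting that $\mbf{\Sigma}^{-1}\log\mbf{\theta} \le 0$ componentwise follows from $0 \le \mbf{\theta} \le 1$ together with positive definiteness of $\mbf{\Sigma}$, so that $\mathrm{diag}(\mbf{\Sigma}^{-1}\log\mbf{\theta})$ is negative semidefinite for every $\mbf{\theta}$ in the interior of $\Delta$; you (rightly or wrongly) did not grant yourself this claim, and without it your proof establishes a strictly weaker statement than the theorem. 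You are correct that positive definiteness alone does not force the componentwise sign condition, but as written your proposal simply leaves the general case open rather than proving or refuting it.

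Second, and more seriously, your concluding step fails. You assert that once concavity is in hand, ``$f$ meets the hypotheses of Theorem~\ref{thm: 3-1}, so its constant $C_f$ is finite,'' and then invoke Corollary~\ref{corol: 3-1}. This contradicts your own earlier observation: $f \to -\infty$ as any $\theta_k \to 0^+$, so $f$ is not continuously differentiable (indeed not finite) on all of $\Delta$, the curvature constant $C_f$ over $\Delta$ is infinite, and the Frank--Wolfe initialization at a vertex of $\Delta$ is not even defined. The paper confronts exactly this point and resolves it differently: it states that the \textsf{FW} algorithm \emph{cannot} be applied directly, and instead appeals to the algorithms of \cite{Jaggi11}, which operate in the interior of $\Delta$ and retain a linear convergence rate; that citation is what actually delivers the ``linear rate'' in the theorem. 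Your alternative idea --- running Frank--Wolfe on a compact interior-approximating subsimplex --- is a reasonable instinct, but you never carry it out (you would need to bound $C_f$ on the subsimplex and control the gap between the maximum over the subsimplex and over $\Delta$, noting the maximizer is interior), and your final paragraph silently abandons it in favor of the invalid direct application of Theorem~\ref{thm: 3-1}.
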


\begin{proof}
Note that $p(\mbf{x}; \mbf{0, \Sigma}) = \frac{1}{\sqrt{\det(2\pi \mbf{\Sigma})}} \exp(-\frac{1}{2} \mbf{x}^t \mbf{\Sigma}^{-1} \mbf{x})$ is the density function of $\mathcal{N}(\mbf{x}; \mbf{0, \Sigma})$. From Lemma \ref{lem: 3-2}, the MAP inference problem in CTM can be reformulated as, where $\log \mbf{\theta} = (\log \theta_1, ..., \log \theta_K)^t$,

\begin{equation}\label{eq:2}
    \boldsymbol{\theta}^* = \arg \max_{\boldsymbol{\theta} \in \Delta} \sum_{j \in I_d} d_j \log \sum_{k=1}^K \theta_k \beta_{kj} - \frac{1}{2}(\log \mbf{\theta})^t \mbf{\Sigma}^{-1} \log \mbf{\theta}.
\end{equation}

We next show that the objective function of this problem is concave over the unit simplex $\Delta$. Indeed, it is easy to check that the term $\sum_{j \in I_d} d_j \log \sum_{k=1}^K \theta_k \beta_{kj}$ is concave w.r.t $\mbf{\theta}$. Our remaining task is to show the concavity of the term $y(\mbf{\theta}) = - \frac{1}{2}(\log \mbf{\theta})^t \mbf{\Sigma}^{-1} \log \mbf{\theta}$. Its first and second derivatives are
\begin{eqnarray*}
y' &=& -diag\left(\frac{1}{\mbf{\theta}}\right) \mbf{\Sigma}^{-1} \log \mbf{\theta}, \\
y'' &=& -diag\left(\frac{1}{\mbf{\theta}}\right) \left[ \mbf{\Sigma}^{-1} - diag\left(\mbf{\Sigma}^{-1} \log \mbf{\theta}\right) \right] diag\left(\frac{1}{\mbf{\theta}}\right),
\end{eqnarray*}
where  $diag(1/\mbf{\theta})$ is the diagonal matrix of size $K$ whose diagonal elements are $\frac{1}{\theta_1}, ..., \frac{1}{\theta_K}$, respectively.

Note that $diag(1/\mbf{\theta})$ is positive definite for any feasible solution $\mbf{\theta} \in \Delta$. One can easily check the fact that a diagonal matrix is negative semidefinite iff all of its diagonal elements are not positive. Note further that $\mbf{\Sigma}^{-1} \log \mbf{\theta} \le 0$, due to $0 \le \mbf{\theta} \le 1$ and positive definiteness of $\mbf{\Sigma}$. As a result, $diag\left(\mbf{\Sigma}^{-1} \log \mbf{\theta}\right)$ is negative semidefinite. Combining it with the positive definiteness of $\mbf{\Sigma}^{-1}$, we can conclude that $y''$ is negative definite for each feasible solution $\mbf{\theta}$ in $\Delta$. This implies that $y(\mbf{\theta})$ is a concave function over the interior of $\Delta$. As a consequence,  (\ref{eq:2}) is a  concave maximization problem over the simplex.

Even though (\ref{eq:2}) is a concave maximization problem, the objective function is not specified on the boundary of $\Delta$. Hence, the \textsf{FW} algorithm cannot be directly applied. Fortunately, algorithms by \cite{Jaggi11} work well in the interior of $\Delta$ and have a linear rate of convergence.
\qed
\end{proof}

This theorem basically says that MAP inference in CTM is in fact tractable and  can be done very fast, which is contrary to the existing belief in the topic modeling literature. Moreover, the inference quality is guaranteed to be good. We believe that the same results can be derived for many other models such as those by \cite{SalomatinYL09,PutthividhyaAN10,VirtanenJKD12}. It is worthwhile noting that optimal solutions to the MAP inference problem in CTM are no longer sparse, because $\mbf{\theta}$ would not to be optimal if it contains any zero component.

If one insists on using the normal distribution in the full form to model correlations, some slight modifications are sufficient to do MAP inference efficiently. Indeed, using similar arguments as in the proof above, we can show that the objective function of inference is concave over the convex region $\{\mbf{\theta} \in \Delta : \log \theta_k \le \mu_k, \forall k\}$. This observation implies that inference is in fact a concave maximization problem over a closed convex set. Hence, there exists an efficient algorithm for inference.

\begin{thm} \label{thm: 3.3.2}
Consider CTM with $K$ topics for which $\mathcal{N}(\mbf{x}; \mbf{\mu, \Sigma})$ models correlations between hidden topics, and a document $\boldsymbol{d}$. Assume further that the transformation $x_k = \log \theta_k$ is used to recover $\mbf{x}$ from topic proportion $\mbf{\theta}$ of $\mbf{d}$. Then there exists an algorithm for MAP inference of $\mbf{\theta} \in \{\mbf{\theta}' \in \Delta : \log \theta'_k \le \mu_k, \forall k\}$ that converges to the optimal solution at a linear rate.
\end{thm}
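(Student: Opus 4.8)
The plan is to imitate the proof of Theorem~\ref{thm: 3.3.1}, carrying the nonzero mean $\mbf{\mu}$ through the computation, and then to restrict attention to the region $R = \{\mbf{\theta} \in \Delta : \log\theta_k \le \mu_k,\ \forall k\}$, on which concavity can be recovered. First I would write down the MAP objective. Using the density $p(\mbf{x}; \mbf{\mu}, \mbf{\Sigma}) \propto \exp(-\tfrac{1}{2}(\mbf{x}-\mbf{\mu})^t \mbf{\Sigma}^{-1}(\mbf{x}-\mbf{\mu}))$ with the identification $\mbf{x} = \log\mbf{\theta}$ and Lemma~\ref{lem: 3-2}, MAP inference amounts to maximizing
\[
f(\mbf{\theta}) = \sum_{j \in I_d} d_j \log \sum_{k=1}^K \theta_k \beta_{kj} - \tfrac{1}{2}(\log\mbf{\theta} - \mbf{\mu})^t \mbf{\Sigma}^{-1}(\log\mbf{\theta} - \mbf{\mu}).
\]
The first term is concave in $\mbf{\theta}$ exactly as in Lemma~\ref{lem: 3-1}, so the whole question reduces to the concavity of $y(\mbf{\theta}) = -\tfrac{1}{2}(\log\mbf{\theta} - \mbf{\mu})^t \mbf{\Sigma}^{-1}(\log\mbf{\theta} - \mbf{\mu})$.

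Next I would differentiate $y$ twice. Repeating the chain-rule calculation of Theorem~\ref{thm: 3.3.1}, but with $\mbf{\mu}$ absorbed into the argument, yields
\[
y'' = -diag\!\left(\tfrac{1}{\mbf{\theta}}\right)\Big[\mbf{\Sigma}^{-1} - diag\big(\mbf{\Sigma}^{-1}(\log\mbf{\theta} - \mbf{\mu})\big)\Big]diag\!\left(\tfrac{1}{\mbf{\theta}}\right),
\]
which is precisely the earlier Hessian with $\log\mbf{\theta}$ replaced by $\log\mbf{\theta} - \mbf{\mu}$. Since $diag(1/\mbf{\theta})$ is positive definite, conjugation by it preserves definiteness, so the sign of $y''$ is governed by $\mbf{\Sigma}^{-1} - diag(\mbf{\Sigma}^{-1}(\log\mbf{\theta} - \mbf{\mu}))$. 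On $R$ we have $\log\mbf{\theta} - \mbf{\mu} \le \mbf{0}$ componentwise; arguing as in Theorem~\ref{thm: 3.3.1}, this makes $\mbf{\Sigma}^{-1}(\log\mbf{\theta} - \mbf{\mu})$ nonpositive, hence $diag(\mbf{\Sigma}^{-1}(\log\mbf{\theta} - \mbf{\mu}))$ negative semidefinite. Adding it to the positive-definite $\mbf{\Sigma}^{-1}$ gives a positive-definite matrix, so $y''$ is negative definite on the interior of $R$ and $f$ is concave there.

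It then remains to observe that $R$ is a closed convex set: each constraint $\log\theta_k \le \mu_k$ is the linear bound $\theta_k \le e^{\mu_k}$, so $R$ is the intersection of the simplex with a box, hence closed and convex (and nonempty whenever $\sum_k e^{\mu_k} \ge 1$). Thus MAP inference on $R$ is a concave maximization over a closed convex set. As in Theorem~\ref{thm: 3.3.1}, $f$ is undefined on the faces where some $\theta_k = 0$, but there $y \to -\infty$, so the constrained maximizer stays in the interior and one may work there; a Frank-Wolfe-type method over $R$ in the spirit of \cite{Jaggi11}, which enjoys a linear rate, then applies and gives the claimed convergence.

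The step I expect to be the main obstacle is the componentwise sign claim $\mbf{\Sigma}^{-1}(\log\mbf{\theta} - \mbf{\mu}) \le \mbf{0}$ on $R$, inherited verbatim from Theorem~\ref{thm: 3.3.1}. Positive definiteness of $\mbf{\Sigma}^{-1}$ does \emph{not} in general map a nonpositive vector to a nonpositive vector, so this is exactly the delicate point: one must make precise why the truncation $\log\mbf{\theta} \le \mbf{\mu}$ is what forces the diagonal correction term to be negative semidefinite and thereby rescues concavity of $f$ on $R$. A secondary, minor point is to confirm nonemptiness of $R$ and that restricting the feasible set to $R$ is the intended formulation, so that no solution of interest is discarded.
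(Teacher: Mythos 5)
Your proposal follows exactly the route the paper itself takes: the paper's justification of this theorem is a one-paragraph sketch stating that ``using similar arguments as in the proof'' of Theorem~\ref{thm: 3.3.1}, the objective is concave over the convex region $\{\mbf{\theta} \in \Delta : \log\theta_k \le \mu_k,\ \forall k\}$, so that inference becomes a concave maximization over a closed convex set, which admits a linear-rate algorithm. Your derivation of the MAP objective, the Hessian with $\log\mbf{\theta}$ replaced by $\log\mbf{\theta}-\mbf{\mu}$, and the observation that $R$ is the intersection of $\Delta$ with a box (plus the nonemptiness condition $\sum_k e^{\mu_k}\ge 1$, which the paper never mentions) are all consistent with, and somewhat more careful than, what the paper does.

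However, the obstacle you flag is a genuine gap, and it is a flaw in the paper's own argument, not an artifact of your reconstruction. The proof of Theorem~\ref{thm: 3.3.1} asserts $\mbf{\Sigma}^{-1}\log\mbf{\theta} \le 0$ ``due to $0\le\mbf{\theta}\le 1$ and positive definiteness of $\mbf{\Sigma}$'', and the present theorem inherits that step with $\log\mbf{\theta}-\mbf{\mu}$ in place of $\log\mbf{\theta}$. As you suspect, positive definiteness does not make $\mbf{\Sigma}^{-1}$ entrywise nonnegative, so it does not preserve the componentwise sign of a vector. Concretely, take $\mbf{\mu}=\mbf{0}$ (so $R=\Delta$), $\mbf{\Sigma}^{-1}=\bigl(\begin{smallmatrix}2 & -1\\ -1 & 2\end{smallmatrix}\bigr)$, and $\mbf{\theta}=(1-e^{-10},\,e^{-10})^t$; then $\mbf{u}=\log\mbf{\theta}-\mbf{\mu}\le \mbf{0}$ but $(\mbf{\Sigma}^{-1}\mbf{u})_1 \approx 10 - 2e^{-10} > 0$, so $diag\bigl(\mbf{\Sigma}^{-1}(\log\mbf{\theta}-\mbf{\mu})\bigr)$ is not negative semidefinite, the matrix $\mbf{\Sigma}^{-1} - diag\bigl(\mbf{\Sigma}^{-1}(\log\mbf{\theta}-\mbf{\mu})\bigr)$ has a negative diagonal entry, and the concavity argument breaks at a feasible point of $R$. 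Closing this gap requires an additional hypothesis (e.g., $\mbf{\Sigma}^{-1}$ entrywise nonnegative) or a different argument; neither your proposal nor the paper supplies one. So your attempt is exactly as complete as the paper's proof, with the merit that you explicitly identified the weak step that the paper passes over silently.
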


\remark{We have seen that \textsf{FW} cannot be used directly to do inference for CTM, since the objective function of inference (\ref{eq:2}) is not well-defined on the boundary of the unit simplex. However, we may do inference for CTM by \textsf{FW} with some slight modifications. Indeed, one can replace the initial step of the Frank-Wolfe algorithm by setting $\mbf{\theta}_0$ to be $(1/K,...,1/K)^t$ or a certain point in the interior of $\Delta$. We believe that this slight modification does not change significantly the convergence rate of the original algorithm.}

\remark{Once topic proportions can be inferred efficiently, we can easily design a new learning algorithm for CTM. One can forget the latent variable $z$ and just do MAP inference to find $\mbf{\theta}$ for each document in the E-step. The M-step  maximizes the likelihood of the training data w.r.t. the model parameters. The same idea was investigated by \cite{ThanH2012fstm}, resulting in a topic model with many attractive properties for dealing with large data. We believe that if following such a learning approach, we can easily learn CTM at a large scale, and hence enable large-scale analyses of correlations of latent topics.}

\section{Empirical evaluation} \label{sec: experiment}

In this section, we explore how well our framework works compared with existing inference methods. We first investigate some fundamental characteristics of the \textsf{FW} framework, including sparsity of the inferred topic proportions, inference time, and inference quality. In addition to theoretical analysis and demonstration, we made a library for use in practice that is very easy for researchers/users to incorporate our framework into their customized models, just by writing their own objective functions. This may help substantially reduce complication and time for researchers when designing new topic models. The library is general enough to be applicable to inference in other literatures than topic modeling.\footnote{The library is freely available at www.jaist.ac.jp/$\sim$s1060203/codes/FW/.}

The flexibility of the \textsf{FW} framework is evidenced by two specific applications. In the first one, we successfully develop \emph{fully sparse topic models} (FSTM) \citep{ThanH2012fstm} which is a simplified variant of PLSA and LDA. FSTM has been demonstrated to work well and has various attractive properties for dealing with large data. In the second application, we employ \textsf{FW} to design effective methods for supervised dimension reduction \citep{ThanHNP12}.


\subsection{Time, sparsity, and quality}

Analyses in the previous section have shown that inference by our framework is both fast and provably good, if provided a suitable choice of the objective function. In this section, we demonstrate empirically that even with the modest choice, say likelihood, our framework infers comparably well. Three inference methods were taken in comparison: Folding-in \citep{Hof01},  Variational Bayesian \citep{BNJ03}, denoted by VB, and \textsf{FW}.\footnote{CVB, CVB0, and CGS were not included for some reasons. CVB is often slower than VB \citep{MukherjeeB2009}; CVB0 is faster than VB but works on documents which are not in bag-of-words representation; CGS is often slowest. Futhermore, these methods can achieve comparable quality as long as suitable parameter settings are chosen \citep{Asuncion+2009smoothing}. Hence VB is selected to be a representative.}  The objective function for \textsf{FW}  is  the log likelihood function. Five corpora were used in the investigation, of which some statistics are shown in Table \ref{Table: Data-info}.\footnote{AP was retrieved from http://www.cs.princeton.edu/$\sim$blei/lda-c/ap.tgz. KOS, NIPS, and Enron were from http://archive.ics.uci.edu/ml/datasets/. Grolier was from http://cs.nyu.edu/$\sim$roweis/data.html} For each corpus, we first trained the LDA model on the training part. We then did inference on the test set with the same criteria of convergence.\footnote{At most 1000 iterations are allowed for inference, and the algorithm will converge if the relative change of the objective is less than $10^{-6}$.}

\begin{table}[tp]
\centering
\caption{Data for experiments.}
\label{Table: Data-info}
\begin{tabular}{lllll}
\hline
Data & Training size & Testing size & \#Terms & \#Classes \\
\hline
AP & 2021 & 225 & 10473 & 0 \\
KOS & 3087 & 343 & 6906 & 0 \\
NIPS & 1350 & 150 & 12419 & 0 \\
Grolier & 23044 & 6718 & 15276 & 0 \\
Enron & 35875 & 3986 & 28102 & 0 \\
20Newsgroups & 15935 & 3993 & 62061 & 20 \\
Emailspam & 3461 & 866 & 38729 & 2 \\
\hline
\end{tabular}
\vspace{-10pt}
\end{table}

\emph{Inference time:} the first measure for comparison is inference time. Figure~\ref{fig: FW-plsa-lda} depicts the results of inference on 5 corpora. We observe that Folding-in did slowest. VB did much more quickly than Folding-in. Each iteration of Folding-in took very few computations, much less than that of VB. However, VB often reached convergence in much less steps than Folding-in. That is why overall VB did more quickly. Compared with Folding-in and VB, our framework did inference significantly faster. \textsf{FW} often reached convergence in a few tens of iterations. Note that complexity of our framework heavily depends on how complicated the objective is. In this case, the objective is the log likelihood which needs few computations to be evaluated.  One can realize that  the inference time of \textsf{FW} was not quickly scaled up as the number of topics $K$ increases, while VB and Folding-in increased much faster. This suggests that our framework is substantially more scalable than Folding-in and VB.

\begin{figure}[tp]
\centering
  \includegraphics[width=\textwidth]{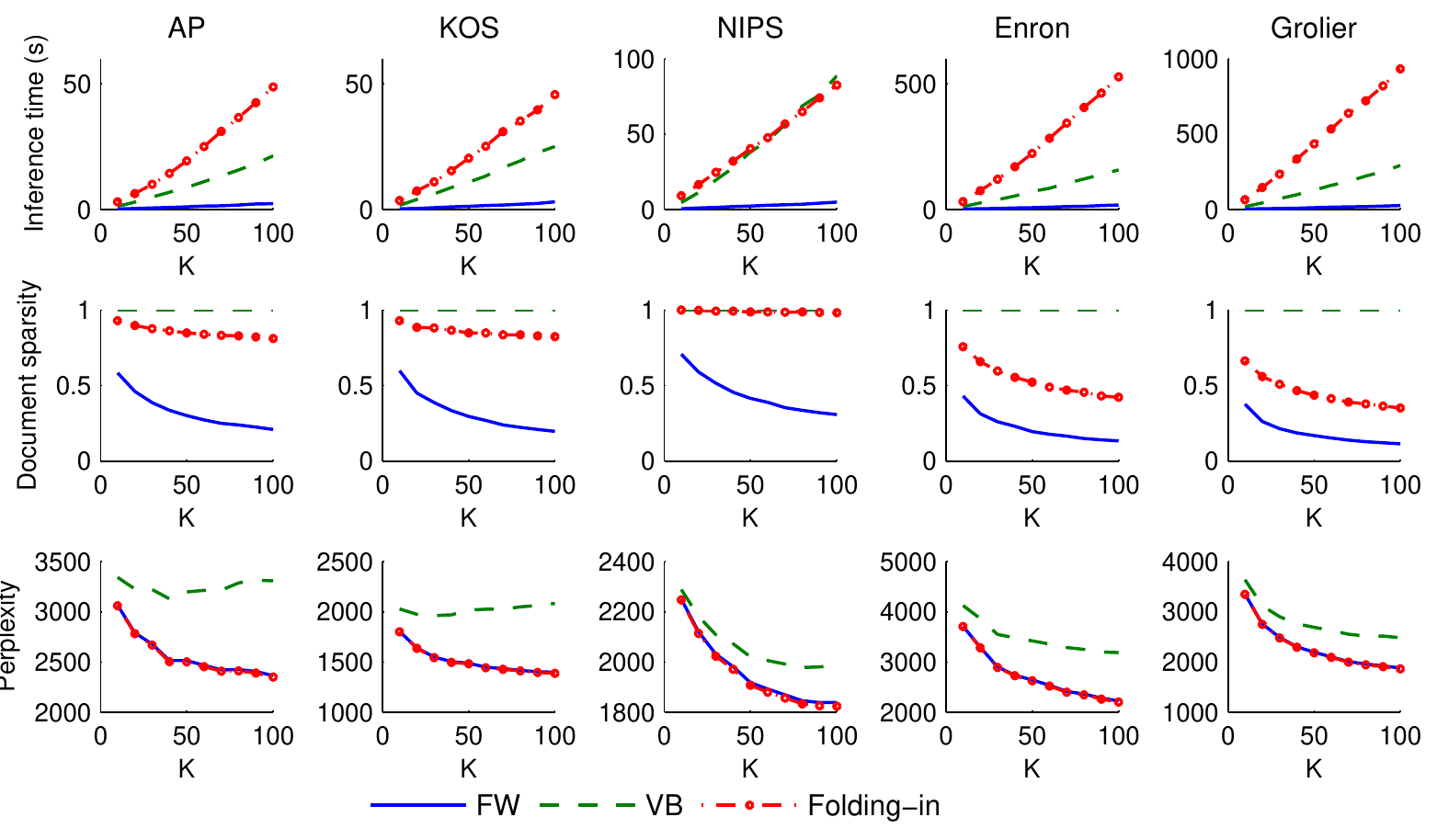}\\
  \caption{Comparison of inference methods as the number of topics increases. Lower is better.}
  \label{fig: FW-plsa-lda}
  \vspace{-5pt}
\end{figure}

\emph{Document sparsity:} we next consider how sparse the inferred topic proportions are. Sparsity of a given document is the fraction of nonzero elements in the inferred latent representation. It is averaged for each test set, and is depicted in the second row of Figure \ref{fig: FW-plsa-lda}. Note that inference by our framework always found very sparse topic proportions. The sparsity level increases as we model with more topics. Surprisingly, inference by Folding-in sometimes achieves sparse topic proportions. One possible reason is that Folding-in may inherit sparsity of original data, since inference by Folding-in simply does addition and multiplication on sparse data. Nevertheless, it is not always for Folding-in to achieve sparse solutions without a principled mechanism. Unsurprisingly, VB did not find any sparse latent representations of documents.

\emph{Perplexity:} Corollary \ref{corol: 3-1} suggests that inference by our framework theoretically finds provably good solutions. This theoretical result is further supported by experiments. The last row of Figure \ref{fig: FW-plsa-lda} shows the goodness of different inference methods in terms of perplexity \citep{BNJ03,{BleiL07}}. Loosely speaking, perplexity is the inverse of the geometric mean of the probabilities of words appearing in the testing documents, and is calculated on the testing set $\mathcal{D}$ by $Perplexity(\mathcal{D}) = \exp\left( - {\sum_{\boldsymbol{d} \in \mathcal{D}} \log P(\boldsymbol{d})} / {\sum_{\boldsymbol{d} \in \mathcal{D}} ||\boldsymbol{d}||_1}\right).$ Observing Figure \ref{fig: FW-plsa-lda}, we see that Folding-in and \textsf{FW} achieved comparably good predictive power. They performed much better than VB even though they were given the same models which had been trained before.

To explain this phenomenon, more thorough investigations are necessary. We observed that in all cases, LDA learned very small parameters $\alpha$ of the Dirichlet priors. Remember that when $\alpha < 1$, inference in LDA is NP-hard \citep{SontagR11}. The NP-hardness may prevent the variational method from quickly inferring good solutions. This may be the main reason for the inferior performance of VB. Note further that inference in LDA is MAP inference, whose objective is different from the likelihood of data. But perplexity mainly relates to likelihood. Therefore, asynchronous objective functions for inference is another reason for inferior performance of VB in terms of perplexity.

\emph{Separability of documents in the topical space:} topic models are often expected to provide us a soft clustering of documents in the space of topics, i.e., clustering documents into topical clusters. Hence we would like to see how well inference methods cluster the testing documents. A good method should cluster documents into topics \emph{separately}. In other words, in the topical space, the documents should be separately clustered. To see this, we use the inferred latent representations of documents, and visualize the first 3 dimensions. Figure \ref{fig: separability} shows the distribution of documents in the topical space. One can observe that the documents projected by VB spread around the axes, and they were not separated clearly into clusters. Similar phenomenon can be observed for Folding-in. Meanwhile, when projected by \textsf{FW}, each document focused more on few topics, and the documents were separated into clusters explicitly. We observed that inference by our framework often places very high probability on one topic, small probabilities on few more topics, and zero on others. This may be why, in the topical space, the documents are explicitly clustered. As a result, inference by our framework provides a better clustering of documents in the topical space.

\begin{figure}[tp]
\centering
  \includegraphics[width=\textwidth]{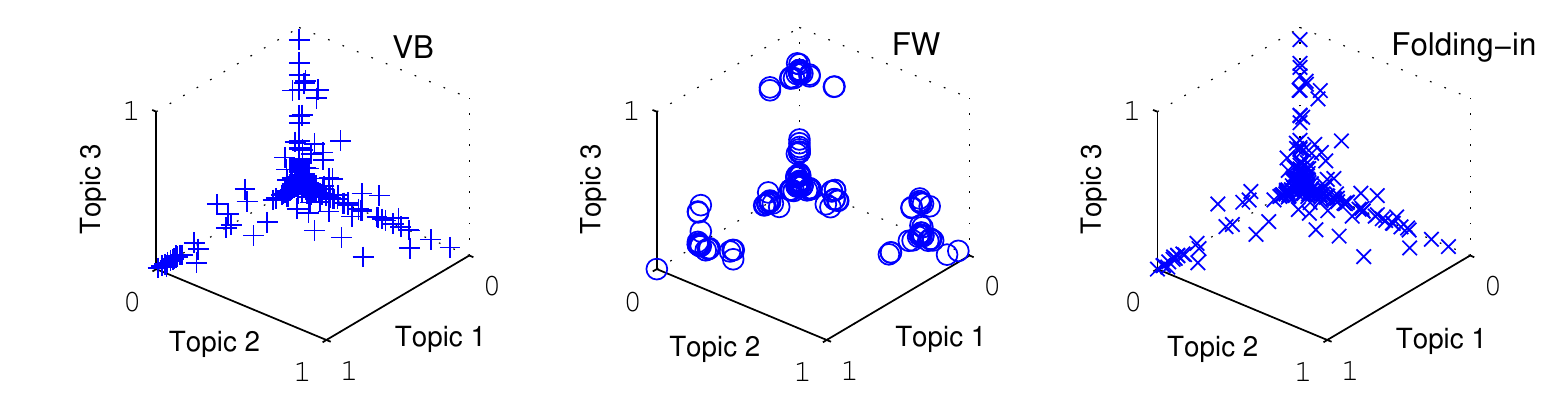}\\
  \caption{Separability of documents in the space of topics, inferred by different methods on AP with $K=10$. Folding-in and VB do not provide  separate clusters of documents. Meanwhile, \textsf{FW} always separates documents explicitly into clusters associated with latent topics.}
  \label{fig: separability}
\end{figure}

\subsection{Convergence rate and trade-off} \label{sec: trade-off}
When facing with large-scale settings including large corpora, extremely high dimensionality, and large number of topics, fast algorithms and compact storage demands are highly desired.  Hence a principled way to trade off quality against time and storage requirement is sometimes necessary. Fortunately, the Frank-Wolfe algorithm can fulfill those desires for not only topic modeling but also other literatures. Indeed, it is provably fast and provides a simple way to decide the sparsity level of solutions, just by limiting the number of iterations.

We investigated further how quick \textsf{FW} reaches convergence in practice. The experiments were done with AP (small size) and Enron (average size), and on the learned LDA with $K=100$ topics. Results are shown in Figure~\ref{fig: trade-off}. One can realize that \textsf{FW} reached convergence very quickly. We found that in most cases, after 20 iterations on average the quality was almost stable. Note that the dimension of the inference problem is $K=100$ which is much larger than 20. The sparsity level of solutions got stable almost after 30 iterations. The same phenomenon was observed on other corpora. These facts suggest that \textsf{FW} can converge very quickly in practice despite of the loose bound in Theorem \ref{thm: 3-1}. This property is attractive for practical applications.

\begin{figure}[tp]
\centering
  \includegraphics[width=\textwidth]{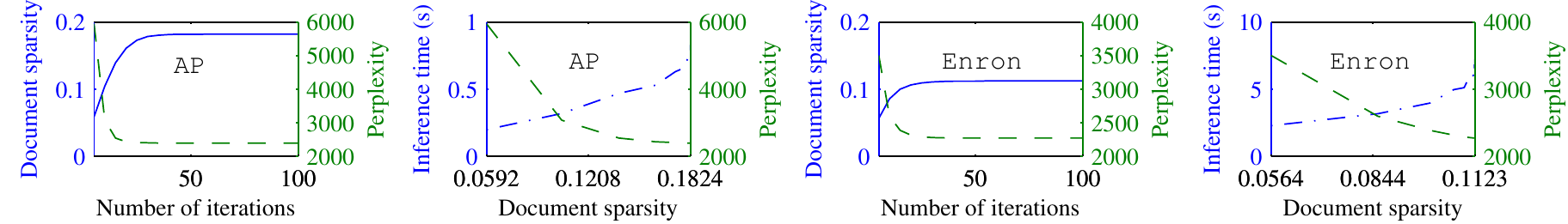}\\
  \caption{Illustration of trading off sparsity against time and quality. \textsf{FW} is able to reach convergence very quickly. After 20 iterations on average, its quality in terms of perplexity was almost stable, even though the number of topics is much larger ($K=100$).}
  \label{fig: trade-off}
\end{figure}

\section{Conclusion}
We make two contributions in this article. First, a framework (\textsf{FW}) for efficiently inferring sparse latent representations of documents is introduced. From theoretical and empirical analyses, the framework is shown to work significantly fast and always infer sparse solutions. Second, we show that inference in topic models with nonconjugate priors can be done efficiently, which is contrary to the previous belief \citep{BleiL07,AhmedX2007,SalomatinYL09,PutthividhyaAN10} that inference in nonconjugate models is intractable.


\bibliographystyle{plainnat}
\bibliography{topic-models,other}

\begin{thebibliography}{25}
\providecommand{\natexlab}[1]{#1}
\providecommand{\url}[1]{\texttt{#1}}
\expandafter\ifx\csname urlstyle\endcsname\relax
  \providecommand{\doi}[1]{doi: #1}\else
  \providecommand{\doi}{doi: \begingroup \urlstyle{rm}\Url}\fi

\bibitem[Ahmed and Xing(2007)]{AhmedX2007}
Amr Ahmed and Eric Xing.
\newblock On tight approximate inference of the logistic-normal topic admixture
  model.
\newblock In \emph{AISTATS}, volume~2 of \emph{Journal of Machine Learning
  Research: W\&CP}, pages 19--26, 2007.

\bibitem[Asuncion et~al.(2009)Asuncion, Welling, Smyth, and
  Teh]{Asuncion+2009smoothing}
A.~Asuncion, M.~Welling, P.~Smyth, and Y.W. Teh.
\newblock On smoothing and inference for topic models.
\newblock In \emph{Proceedings of the Twenty-Fifth Conference on Uncertainty in
  Artificial Intelligence}, pages 27--34, 2009.

\bibitem[Blei and Lafferty(2007)]{BleiL07}
David~M. Blei and John Lafferty.
\newblock A correlated topic model of science.
\newblock \emph{The Annals of Applied Statistics}, 1\penalty0 (1):\penalty0
  17--35, 2007.

\bibitem[Blei et~al.(2003)Blei, Ng, and Jordan]{BNJ03}
David~M. Blei, Andrew~Y. Ng, and Michael~I. Jordan.
\newblock Latent dirichlet allocation.
\newblock \emph{Journal of Machine Learning Research}, 3\penalty0 (3):\penalty0
  993--1022, 2003.

\bibitem[Clarkson(2010)]{Clarkson2010}
Kenneth~L. Clarkson.
\newblock Coresets, sparse greedy approximation, and the frank-wolfe algorithm.
\newblock \emph{ACM Trans. Algorithms}, 6:\penalty0 63:1--63:30, 2010.
\newblock ISSN 1549-6325.
\newblock \doi{http://doi.acm.org/10.1145/1824777.1824783}.
\newblock URL \url{http://doi.acm.org/10.1145/1824777.1824783}.

\bibitem[Griffiths and Steyvers(2004)]{GriffithsS2004}
T.L. Griffiths and M.~Steyvers.
\newblock Finding scientific topics.
\newblock \emph{Proceedings of the National Academy of Sciences of the United
  States of America}, 101\penalty0 (Suppl 1):\penalty0 5228, 2004.

\bibitem[Hofmann(2001)]{Hof01}
Thomas Hofmann.
\newblock Unsupervised learning by probabilistic latent semantic analysis.
\newblock \emph{Machine Learning}, 42:\penalty0 177--196, 2001.
\newblock ISSN 0885-6125.
\newblock URL \url{http://dx.doi.org/10.1023/A:1007617005950}.

\bibitem[Jaggi(2011)]{Jaggi11}
Martin Jaggi.
\newblock Convex optimization without projection steps.
\newblock \emph{CoRR}, abs/1108.1170, 2011.

\bibitem[Larsson and Ugander(2011)]{LarssonU11}
Martin~O. Larsson and Johan Ugander.
\newblock A concave regularization technique for sparse mixture models.
\newblock In J.~Shawe-Taylor, R.S. Zemel, P.~Bartlett, F.C.N. Pereira, and K.Q.
  Weinberger, editors, \emph{Advances in Neural Information Processing Systems
  24}, pages 1890--1898. 2011.

\bibitem[Mimno et~al.(2012)Mimno, Hoffman, and Blei]{MimnoHB12}
David Mimno, Matthew~D. Hoffman, and David~M. Blei.
\newblock Sparse stochastic inference for latent dirichlet allocation.
\newblock In \emph{Proceedings of the 29th Annual International Conference on
  Machine Learning}, 2012.

\bibitem[Mukherjee and Blei(2009)]{MukherjeeB2009}
I.~Mukherjee and D.M. Blei.
\newblock Relative performance guarantees for approximate inference in latent
  dirichlet allocation.
\newblock In \emph{Advances in Neural Information Processing Systems},
  volume~21, pages 1129--1136, 2009.

\bibitem[Putthividhy et~al.(2010)Putthividhy, Attias, and
  Nagarajan]{PutthividhyaAN10}
D.~Putthividhy, H.T. Attias, and S.S. Nagarajan.
\newblock Topic regression multi-modal latent dirichlet allocation for image
  annotation.
\newblock In \emph{Computer Vision and Pattern Recognition (CVPR), 2010 IEEE
  Conference on}, pages 3408 --3415, 2010.
\newblock \doi{10.1109/CVPR.2010.5540000}.

\bibitem[Salomatin et~al.(2009)Salomatin, Yang, and Lad]{SalomatinYL09}
Konstantin Salomatin, Yiming Yang, and Abhimanyu Lad.
\newblock Multi-field correlated topic modeling.
\newblock In \emph{Proceedings of the SIAM International Conference on Data
  Mining (SDM)}, pages 628--637, 2009.

\bibitem[Shashanka et~al.(2007)Shashanka, Raj, and Smaragdis]{ShashankaRS2007}
Madhusudana Shashanka, Bhiksha Raj, and Paris Smaragdis.
\newblock Sparse overcomplete latent variable decomposition of counts data.
\newblock In \emph{Advances in Neural Information Processing Systems (NIPS)},
  2007.

\bibitem[Sontag and Roy(2011)]{SontagR11}
David Sontag and Daniel~M. Roy.
\newblock Complexity of inference in latent dirichlet allocation.
\newblock In \emph{Advances in Neural Information Processing Systems (NIPS)},
  2011.

\bibitem[Teh et~al.(2007)Teh, Newman, and Welling]{TehNW2007collapsed}
Y.W. Teh, D.~Newman, and M.~Welling.
\newblock A collapsed variational bayesian inference algorithm for latent
  dirichlet allocation.
\newblock In \emph{Advances in Neural Information Processing Systems},
  volume~19, page 1353, 2007.

\bibitem[Than and Ho(2012)]{ThanH2012fstm}
Khoat Than and Tu~Bao Ho.
\newblock Fully sparse topic models.
\newblock In Peter Flach, Tijl De~Bie, and Nello Cristianini, editors,
  \emph{Machine Learning and Knowledge Discovery in Databases}, volume 7523 of
  \emph{Lecture Notes in Computer Science}, pages 490--505. Springer, 2012.

\bibitem[Than et~al.(2012)Than, Ho, Nguyen, and Pham]{ThanHNP12}
Khoat Than, Tu~Bao Ho, Duy~Khuong Nguyen, and Ngoc~Khanh Pham.
\newblock Supervised dimension reduction with topic models.
\newblock In \emph{ACML}, volume~25 of \emph{Journal of Machine Learning
  Research: W\&CP}, 2012.

\bibitem[Tibshirani(1996)]{Tibshirani1996lasso}
Robert Tibshirani.
\newblock Regression shrinkage and selection via the lasso.
\newblock \emph{Journal of the Royal Statistical Society. Series B
  (Methodological)}, pages 267--288, 1996.

\bibitem[Virtanen et~al.(2012)Virtanen, Jia, Klami, and Darrell]{VirtanenJKD12}
Seppo Virtanen, Yangqing Jia, Arto Klami, and Trevor Darrell.
\newblock Factorized multi-modal topic model.
\newblock In \emph{Proceedings of the 28th International Conference on
  Uncertainty in Artificial Intelligence}, pages 843--851, 2012.

\bibitem[Williamson et~al.(2010)Williamson, Wang, Heller, and
  Blei]{WilliamsonWHB2010}
Sinead Williamson, Chong Wang, Katherine~A. Heller, and David~M. Blei.
\newblock The ibp compound dirichlet process and its application to focused
  topic modeling.
\newblock In \emph{International Conference on Machine Learning (ICML)}, 2010.

\bibitem[Yao et~al.(2009)Yao, Mimno, and McCallum]{YaoMM09}
Limin Yao, David Mimno, and Andrew McCallum.
\newblock Efficient methods for topic model inference on streaming document
  collections.
\newblock In \emph{Proceedings of the 15th ACM SIGKDD International Conference
  on Knowledge Discovery and Data Mining}, KDD '09, pages 937--946. ACM, 2009.
\newblock ISBN 978-1-60558-495-9.
\newblock URL \url{http://doi.acm.org/10.1145/1557019.1557121}.

\bibitem[Yuan and Yan(2012)]{YuanY12}
Xiaotong Yuan and Shuicheng Yan.
\newblock Forward basis selection for sparse approximation over dictionary.
\newblock In \emph{Proceedings of the 15th International Conference on
  Artificial Intelligence and Statistics (AISTATS)}, volume~22 of \emph{Journal
  of Machine Learning Research: W\&CP}, pages 1377--1388, 2012.

\bibitem[Zhang(2003)]{Zhang03}
Tong Zhang.
\newblock Sequential greedy approximation for certain convex optimization
  problems.
\newblock \emph{IEEE Transactions on Information Theory}, 49\penalty0
  (3):\penalty0 682 -- 691, 2003.
\newblock ISSN 0018-9448.
\newblock \doi{10.1109/TIT.2002.808136}.

\bibitem[Zhu and Xing(2011)]{ZhuX2011}
Jun Zhu and Eric~P. Xing.
\newblock Sparse topical coding.
\newblock In \emph{Proceedings of the 27th Conference on Uncertainty in
  Artificial Intelligence (UAI)}, 2011.

\end{thebibliography}


\end{document}